\numberwithin{equation}{section}
\theoremstyle{plain}
\newtheorem*{thm}{Theorem}
\theoremstyle{plain}
\newtheorem{theorem}{Theorem}[section]
\newtheorem{lemma}[theorem]{Lemma}
\theoremstyle{definition}
\theoremstyle{remark}
\newcommand{\argmin}{\operatornamewithlimits{argmin}}
\newcommand{\Var}{\operatornamewithlimits{Var}}
\newcommand{\card}{\operatornamewithlimits{card}}
\newcommand{\T}[1]{\ensuremath{{#1}^{\mbox{\sf\tiny T}}}}
\begin{document}

\title{Discovering Graphical Granger Causality \\ Using the Truncating Lasso Penalty}
\author{Ali Shojaie and George Michailidis \\ Department of Statistics, University of Michigan}
\date{}

\maketitle

\begin{abstract}
    Components of biological systems interact with each other in order to carry out vital cell functions. Such information can be used to improve estimation and inference, and to obtain better insights into the underlying cellular mechanisms. Discovering regulatory interactions among genes is therefore an important problem in systems biology. Whole-genome expression data over time provides an opportunity to determine how the expression levels of genes are affected by changes in transcription levels of other genes, and can therefore be used to discover regulatory interactions among genes.
    In this paper, we propose a novel penalization method, called \emph{truncating lasso}, for estimation of causal relationships from time-course gene expression data. The proposed penalty can correctly determine the order of the underlying time series, and improves the performance of the lasso-type estimators. Moreover, the resulting estimate provides information on the time lag between activation of transcription factors and their effects on regulated genes. We provide an efficient algorithm for estimation of model parameters, and show that the proposed method can consistently discover causal relationships in the large $p$, small $n$ setting. The performance of the proposed model is evaluated favorably in simulated, as well as real, data examples.
    The proposed truncating lasso method is implemented in the R-package \texttt{grangerTlasso} and is available at \href{http://www.stat.lsa.umich.edu/~shojaie/}{www.stat.lsa.umich.edu/$\sim$shojaie}.
\end{abstract}

\section{Introduction}\label{intro}
  A critical problem in systems biology is to discover causal relationships among components of biological systems. Gene regulatory networks, metabolic networks and cell signalling networks capture causal relationships in cells. Discovery of causal relationships may be only possible through carefully designed experiments, which can be challenging. However, gene regulation is carried out by binding of protein products of transcription factors to \emph{cis}-regulatory elements of genes. Such regulatory mechanisms are evident if the expression levels of gene $X$ is affected by changes in expression levels of gene $Y$. Therefore, time course gene expression data can be used to discover causal relationships among genes and construct the gene regulatory network.

  Different methods have been developed to infer causal relationships from time series data, including dynamic Bayesian Networks \citep{murphy2002dbn} and Granger causality \citep{granger1969icr}. In dynamic Bayesian Networks (DBNs) the state space of Bayesian Networks is expanded by replicating the set of variables in the network by the number of time points. Cyclic networks are then transformed to directed acyclic graphs (DAGs) by breaking down cycles into interactions between variables at two different time points. \citet{ong2002mrp} and \citet{perrin2003gni} among others have applied DBNs to infer causal relationships among components of biological systems.

  On the other hand, the concept of Granger causality states that gene $X$ is Granger-causal for gene $Y$ if the autoregressive model of $Y$ based on past values of both genes is significantly more accurate than the model based on $Y$ alone. This implies that changes in expression levels of genes could be explained by expression levels of their transcription factors. Therefore, statistical methods can be applied to time-course gene expression observations to estimate Granger causality among genes.

  Exploring Granger causality is closely related to analysis of vector autoregressive (VAR) models, which are widely used in econometrics. \citet{yamaguchi2007fmb} and \citet{opgen2007lcn} employed VAR models to learn gene regulatory networks, while \citet{fujita2007mge} proposed a sparse VAR model for better performance in cases when the number of genes, $p$ is large compared to the sample size, $n$. Similar sparse models have also been considered by \citet{mukhopadhyay2007cps}.

  \citet{zou2009gcd} compared the performance of DBNs and Granger causality methods for estimation of causal relationships and concluded that the performance of the two approaches depend on the length of the time series as well as the sample size. The findings of \citet{zou2009gcd} emphasizes the need for sparse models in cases where the sample size is small. In particular, when $p \gg n$, penalized methods often provide better prediction accuracy. \citet{arnold2007tcm} applied the lasso (or $\ell_1$) penalty to discover the structure of graphical models based on the concept of Granger causality and studied the relationship between different key performance indicators in analysis of stock prices.

  Asymptotic and empirical performances of the lasso penalty for discovery of graphical models have been studied by many researchers and a number of extensions of the original penalty have been proposed (we refer to these variants of the lasso penalty as ``lasso-type'' penalties). In particular, to reduce the bias in the lasso estimates, \citet{zou2006ala} proposed the adaptive lasso penalty, and showed that for fixed $p$, if appropriate weights are used, the adaptive lasso penalty can achieve variable selection consistency even if the so-called \emph{irrepresentability} assumption is violated. In fact, it can also be shown that if initial weights are derived from regular lasso estimates, the adaptive lasso penalty is also consistent for variable selection in high dimensional sparse settings \citep{shojaie2009plDAG}.

  The lasso estimate of the graphical Granger model may result in a model in which $X$ is considered to influence $Y$ in a number of different time lags. Such a model is hard to interpret and inclusion of additional covariates in the model may result in poor model selection performance. \citet{lozano2009ggg} have recently proposed to use a group lasso penalty in order to obtain a simpler Granger graphical model. The group lasso penalty takes the average effect of $X$ on $Y$ over different time lags and considers $X$ to be Granger-causal for $Y$ if the average effect is significant. However, this results in significant loss of information, as the time difference between activation of $X$ and its effect on $Y$ is ignored. Moreover, due to the averaging effect, the sign of effects of the variables on each other can not be determined from the group lasso estimate. Hence, whether $X$ is an activator or a suppressor for $Y$ and/or the magnitudes of its effect remain unknown.

  In this paper, we propose a novel \emph{truncating lasso} penalty for estimation of graphical Granger models. The proposed penalty has two main features: (i) it automatically determines the order of the VAR model, i.e. the number of effective time lags and (ii) it performs model simplification by reducing the number of covariates in the model. We propose an efficient iterative algorithm for estimation of model parameters, provide an error-based choice for the tuning parameter and prove the consistency of the resulting estimate, both in terms of sign of the effects, as well as, variable selection properties. The proposed method is applied to simulated and real data examples, and is shown to provide better estimates than alternative penalization methods.

  The remainder of the paper is organized as follows. Section \ref{model}, starts with a discussion of the use of lasso-type penalties for estimation of DAGs as well as a review of the concept of graphical Granger causality. The proposed truncating lasso penalty and asymptotic properties of the estimator are discussed in section \ref{model_3}, while the optimization algorithm is presented in section
  \ref{model_4}. Results of simulation studies are presented in section \ref{sim} and applications of the proposed model to time course gene expression data on E-coli and human cancer cell line (HeLa cells) are illustrated in sections \ref{ecoli} and \ref{hela}, respectively. A summary of findings and directions for future research are discussed in section \ref{conc}.

\section{Model and Methods}\label{model}

\subsection{Graphical Models and Penalized Estimates of DAGs}\label{model_1}
  Consider a graph $\mathcal{G} = (V,E)$, where $V$ corresponds to the set of nodes with $p$ elements and $E \subset V \times V$ is the edge set. The nodes of the graph represent random variables $X_1, \ldots, X_p$ and the edges capture associations amongst them. An edge is called directed if $(i,j) \in E \Rightarrow (j,i) \notin E$ and undirected if $(i,j) \in E \Longleftrightarrow (j,i) \in E$. We represent $E$ through the adjacency matrix $A$ of the graph, a $p \times p$ matrix whose $(j,i)-$th entry indicates whether there is an edge (and its weight) between nodes $j$ and $i$.

  Causal relationships among variables are represented by directed graphs where $E$ consists of only directed edges. Let $\rm{pa}_i$ denote the set of \emph{parents} of node $i$ and for $j \in \rm{pa}_i$, write $j \rightarrow i$. The causal effect of random variables in a DAG can be explained using \emph{structural equation models} \citep{pearl2000caus}, where each variable is modeled as a (nonlinear) function of its parents. The general form of these models is given by:
  \begin{equation}\label{eqnSEM}
    X_i = f_i(\rm{pa}_i,Z_i), \hspace{0.5cm} i=1, \ldots, p
  \end{equation}
  The random variables $Z_i$ are the latent variables representing the unexplained variation in each node. To model the association among nodes of a DAG, we consider a simplification of (\ref{eqnSEM}) where $f_i$ is linear. More specifically, let $\rho_{ij}$ represent the \emph{effect} of gene $j$ on $i$ for $j \in \rm{pa}_i$, then
  \begin{equation}\label{eqnSEMlin}
    X_i = \sum_{j \in \rm{pa}_i}{\rho_{ij} X_j} + Z_{i}, \hspace{0.5cm} i=1, \ldots, p
  \end{equation}
  In the special case, where the random variables on the graph are Gaussian, equations (\ref{eqnSEM}) and (\ref{eqnSEMlin}) are equivalent in the sense that $\rho_{ij}$ are the coefficients of the linear regression model of $X_i$ on $X_j, j \in \rm{pa}_i$. It is known in the normal case that $\rho_{ij}=0, j \notin \rm{pa}_i$.

  For the case of DAGs, it can be shown that when the variables inherit a natural ordering, the likelihood function can be directly written in terms of the adjacency matrix of the DAG. It then follows that the penalized estimate of the adjacency matrix can be found by solving $p-1$ penalized regression problems. To see this, let $\mathcal{X}$ be the $n \times p$ matrix of observations and $S = n^{-1}\T{\mathcal{X}}\mathcal{X}$ be the empirical covariance matrix. Then, the estimate of the adjacency matrix of DAGs under the general weighted lasso (or $\ell_1$) penalty, is found by solving the following $\ell_1$-regularized least squares problems for $i=2, \ldots, p$
     \begin{equation} \label{AOpt_DAG_lasso}
       \hat{A}_{i,1:i-1} = \argmin_{ \theta \in \mathbb{R}^{i-1} } { \left\{
              n^{-1} \| \mathcal{X}_{i} - \mathcal{X}_{1:i-1} \theta \|_2^2 +
              \lambda_i \sum_{j=1}^{i-1}{|\theta_j| w_{ij}} \right\}  }
     \end{equation}
  \noindent where $A_{i,1:i-1}$ denotes the first $i-1$ element of the $i$th row of the adjacency matrix and $w_{ij}$ represents the weights. For the lasso penalty $w_{ij} = 1$ and in case of adaptive lasso $w_{ij} = 1 \vee |\tilde{A}_{ij}|^{-1}$ where $\tilde{A}$ are the initial estimates obtained with the regular lasso penalty.

\subsection{Graphical Granger Causality}\label{model_2}
  Let $X^{1:T} = {\{X\}}_{t=1}^T$ and $Y^{1:T} = {\{Y\}}_{t=1}^T$, be trajectories of two stochastic processes $X$ and $Y$ up to time $T$ and consider the following two regression models:
  \begin{equation}\label{GrangerDef_1}
    Y^{T} = A Y^{1:T-1} + B X^{1:T-1} + \varepsilon^T
  \end{equation}
  \begin{equation}\label{GrangerDef_2}
    Y^{T} = A Y^{1:T-1} + \varepsilon^T
  \end{equation}
  Then $X$ is said to be Granger-causal for $Y$ if and only if the model \ref{GrangerDef_1} results in significant improvements over model \ref{GrangerDef_2}. Graphical Granger models extend the notion of Granger causality among two variables to $p$ variables. In general, let $X_1, \ldots, X_p$ be $p$ stochastic processes and denote by $\mathbf{X}$ the rearrangement of these stochastic processes into a vector time series, i.e. $\mathbf{X}^{t} = \T{( X_1^{t}, \ldots, X_p^{t} )}$. We consider models of the form
  \begin{equation}\label{GrangerDef_4}
    \mathbf{X}^{T} = A^{1} \mathbf{X}^{T-1} + \ldots A^{T-1} \mathbf{X}^1 + \varepsilon^T.
  \end{equation}
  In the graphical Granger model, $X_j^t$ is said to be Granger-causal for $X_i^T$ if the corresponding coefficient, $A^t_{i,j}$ is statistically significant. In that case, there exists an edge $X_j^t \rightarrow X_i^T$ in the graphical model with $T \times p$ nodes.

  Such a model corresponds to a DAG with $T \times p$ variables, in which the ordering of the set of $p$-variate vectors $\mathbf{X}^1, \ldots, \mathbf{X}^T$ is determined by the temporal index and the ordering among the elements of each vector is arbitrary. Lasso-type estimates of DAGs can therefore be used in the context of graphical Granger models in order to estimate the effects of variables on each other. The model in \eqref{GrangerDef_4} is also equivalent to Vector Auto-Regressive (VAR) models \citep[][Chapter 2]{lütkepohl2005new}, which have been used for estimation of graphical Granger causality by a number of researchers, including \citet{arnold2007tcm}.

\subsection{Truncating Lasso for Graphical Granger Models}\label{model_3}
  Consider a graphical model with $p$ variables, observed over $T$ time points, and let $d$ be the order of the VAR model or the effective number of time lags (in \eqref{GrangerDef_4} $d=T-1$). As in section \ref{model_1}, let $\mathcal{X}^t$ denote the design matrix corresponding to $t$-th time point, and $\mathcal{X}^t_i$ be its $i$-th column.

  The truncating lasso estimate of the graphical Granger model is found by solving the following estimation problem for $i = 1, \ldots, p$:
  \begin{equation}\label{problem_NstdLasso}
    \argmin_{\theta^t \in \mathbb{R}^p}{
            n^{-1} \| \mathcal{X}_i^T - \sum_{t=1}^{d}{\mathcal{X}^{T-t}\theta^t} \|_2^2 } +
            \lambda \sum_{t=1}^{d} \Psi^t {\sum_{j=1}^{p}{ |\theta_j^{t}|w_j^t }
            }
  \end{equation}
  \[
    \Psi^1=1, \hspace{0.5cm} \Psi^t = M^{ I\{\|A^{(t-1)}\|_0 < p^2 \beta / (T-t) \} }, \thickspace t \ge 2
  \]
  \noindent where $M$ is a large constant, and $\beta$ is the allowed false negative rate, determined by the user. The choice of $\beta$ and the properties of the resulting estimator are discussed in the remainder of this section.

  To illustrate the main idea behind the truncating lasso penalty, we begin by examining the regular lasso estimate of the graphical Granger model. Using the above notation, the general weighted lasso estimate of the graphical Granger model is found by solving the following $p$ non-overlapping $\ell_1$-regularized least square problems for $i = 1, \ldots, p$:
  \begin{equation}\label{problem_lasso}
    \argmin_{\theta^t \in \mathbb{R}^p}{
            n^{-1} \| \mathcal{X}_i^T - \sum_{t=1}^{d}{\mathcal{X}^{T-t}\theta^t} \|_2^2 } +
            \lambda \sum_{t=T-d}^{T-1}{\sum_{j=1}^{p}{ |\theta_j^{t}|w_j^t }
            }
  \end{equation}

  The weighted lasso penalty suffers from two limitations. Firstly, the order of the VAR model $d$ is often unknown and therefore is set to $T-1$, resulting in $p(T-1)$ covariates in the weighted lasso estimation problem. Moreover, the weighted lasso estimate may potentially include edges from different time points of variable $X_j$ to any given variable $X_i$. To overcome these problems, \citet{lozano2009ggg} proposed to use the group lasso estimate, in which the values of coefficients of each variable over the past time points are grouped. The drawback of group lasso penalty is that information on the time lag between activation of gene $j$ and its effect on gene $i$ is lost. Moreover, the resulting estimate does not provide consistent information about the magnitude and sign of the interaction. Thus, important questions including the activation or inhibition effect of $X_j$ on $X_i$ can not be answered.

  To proposed truncating truncating lasso penalty addresses the above shortcomings of the regular lasso penalty, while preventing the loss of information which occurs if the group lasso penalty is used. The truncating effect of the proposed penalty (imposed by $\Psi^t$) is motivated by the rationale that the number of effects (edges) in the graphical model decreases as the time lag increases. Consequently, if there are fewer than $p^2 \beta/(T-t)$ edges in the $(t-1)$st estimate, all the later estimates are forced to zero. Hence, the truncating lasso penalty provides an estimate of the order of the underlying VAR model. In addition, by applying this penalty, the number of covariates in the model is reduced as the coefficients for effects of genes on each other after the estimated time lag are forced to zero.

  The truncating lasso estimate of the graphical Granger model offers desirable asymptotic properties. In particular, it is shown in the Appendix that the resulting estimate is consistent for variable selection (i.e. the correct edges are estimated with increasing probability, as the sample size increases) in the high dimensional sparse setting. Moreover, with high probability, the signs of the effects are consistently estimated and the order of the underlying VAR model is correctly estimated.

\subsection{Choice of the Tuning Parameter}\label{model_5}
  Estimation of the graphical Granger model using the truncating lasso penalty requires selection of two parameters, $\lambda$ and $\beta$. As mentioned in the previous section, $\beta$ is the allowed rate of false negatives. Therefore, selection of $\beta$ can be based on the cost of false negatives in the specific problem at hand, as well as the sample size; as with any other statistical test, as sample size increases, smaller values of $\beta$ can be considered. A practical strategy for selecting $\beta$ is to first find the lasso (or adaptive lasso) estimate and select $\beta$ so that the desired false negative rate is achieved.

  The second parameter, $\lambda$ is common in all penalized estimation methods. We propose the following error-based choice for selection of $\lambda$. Let $Z^*_q$ be the $(1-q)$-th percentile of the standard normal distribution and consider:
  \begin{equation}\label{tuning}
    \lambda = 2n^{-1/2} Z^*_{\frac{\alpha}{2dp^2}}
  \end{equation}
  \noindent then using the results of \citet{shojaie2009plDAG}, it can be shown that for any value of $n$, this choice of $\lambda$ controls a version of false positive rate at the given level of $\alpha$, provided that columns of the design matrix are scaled so that $n^{-1} \T{\mathcal{X}_i}\mathcal{X}_i = 1$. In section \ref{sim}, we evaluate the performance of the proposed method for a range of values of $\alpha$, and show that the performance is not heavily influenced by that choice.

\subsection{Algorithm and Computational Complexity}\label{model_4}
  In the previous section, we discussed that the truncating lasso estimate of the graphical Granger model in \eqref{problem_NstdLasso} is found by solving $p$ weighted lasso problems. However, the optimization problem in \eqref{problem_NstdLasso} is non-convex and can not be solved directly, especially since the truncating factor $\Psi^{t}$ depends on the values of the coefficients at the previous time points. Here we propose an iterative Block-Relaxation algorithm \citep{DeLee:94}, which can be efficiently used to estimate the parameters of the model.

  The main idea of the algorithm is to further break down each of the $p$ sub-problems into $d$ weighted lasso problems, starting with the observations at the most recent time lag, $T-1$. This iterative process is continued by calculating the truncating factor $\Psi^{t}$ at each $t = 1, \ldots, d$ based on the values of the coefficients at the previous time points and solving a weighted lasso problem over $p$ variables at each time point. Algorithm \ref{graphGrangerIterAlg} outlines the above iterative procedure for finding the estimates of the graphical Granger model.

  \floatstyle{ruled}
  \newfloat{algorithm}{b}{loa}
  \floatname{algorithm}{Algorithm}
  \begin{algorithm}
    \caption[Algorithm]{Iterative Algorithm for Estimation of Truncation Lasso}
    \label{graphGrangerIterAlg}
    \begin{tabbing}
    Repeat for $k = 1, 2, \ldots$ (until convergence) \\
        1. \= For $t = 1, \ldots, d$ \\
            \>1.1. \= Calculate $\Psi^t$ based on estimates in $t' = 1, \ldots, t-1$ \\
            \>1.2. \= Using the most recent estimate $\hat{A}^{t'}$, find: \\
            $
                \hspace{1cm} R^t = \mathcal{X}^T - \sum_{t'=1, t' \ne t}^{d}{\hat{A}^{t'}\mathcal{X}^{T-t'}}
            $ \\
            \>1.3. \= \= For $i = 1, \ldots, p$, let $r := R^{t}_{i}$, and solve
    \end{tabbing}
            $
             \hspace{1cm}   \argmin_{\theta}
                { \left\{
                    n^{-1} \| r - \mathcal{X}^{T-t} \theta^t \|_2^2 + \lambda \Psi^t \sum\nolimits_{j=1}^{p}{ |\theta_j^{t}|w_j^t }
                \right\}  }
            $
  \end{algorithm}

  Unlike the (adaptive) lasso problem, the objective function of the truncating lasso problem is non-convex. Therefore, a global minimum for the resulting optimization problem may not exist. However, the following result shows that the proposed algorithm always converges, although the accumulation point may be a local minimum.

  \begin{lemma}
    Algorithm \ref{graphGrangerIterAlg} converges to a stationary point of the (adaptive) truncating lasso estimation problem.
  \end{lemma}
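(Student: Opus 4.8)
The plan is to recognize Algorithm~\ref{graphGrangerIterAlg} as an instance of the block-relaxation (cyclic block coordinate descent) scheme of \citet{DeLee:94} and to verify the hypotheses under which such schemes converge to a stationary point. The natural block decomposition takes the per-lag coefficient vectors $\theta^1, \ldots, \theta^d$ (within each of the $p$ row-wise subproblems, which are coupled only through the truncation factors) as the coordinate blocks. Steps~1.2--1.3 are then precisely the exact minimization of the objective over block $\theta^t$ with the remaining blocks held fixed, since with $\Psi^t$ frozen the block subproblem
\[
  \argmin_{\theta^t} \left\{ n^{-1}\| r - \mathcal{X}^{T-t}\theta^t \|_2^2 + \lambda \Psi^t \sum\nolimits_{j=1}^p |\theta_j^t| w_j^t \right\}
\]
is a convex weighted-lasso program.

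First I would record the elementary regularity facts. The objective $F$ in \eqref{problem_NstdLasso} is continuous, nonnegative, hence bounded below; and because the weights $w_j^t$ are positive and $\Psi^t \ge 1$ always, the penalty lower-bounds $F$ by a coercive $\ell_1$ term, so the quadratic loss staying flat along a ray forces the penalty to grow without bound. The sublevel sets $\{F \le c\}$ are therefore compact. Each block subproblem is a solvable convex program, so the update map is well defined; under the standing scaling $n^{-1}\T{\mathcal{X}_i}\mathcal{X}_i = 1$ together with mild design conditions the block minimizer is unique, giving a single-valued, continuous update map, which is the property needed to invoke Zangwill's global convergence theorem inside the De Leeuw framework.

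The crux is the monotone-descent property, and this is where the non-convexity demands care. The difficulty is that $\Psi^t$ depends on the previous-lag estimate $A^{(t-1)}$, so updating $\theta^t$ alters the multipliers $\Psi^{t+1}, \ldots$ governing later blocks; consequently the exact block minimization does not, on its face, minimize $F$ along block $t$. To handle this I would exploit that each $\Psi^t$ takes only the two values $1$ and $M$, so the truncation pattern $\big(I\{\|A^{(t-1)}\|_0 < p^2\beta/(T-t)\}\big)_{t}$ ranges over a finite set of configurations. Since activating the factor $M$ on a lag drives that lag's coefficients to zero and thereby propagates truncation to all subsequent lags, the admissible configurations are nested and indexed by a single cutoff $\hat d$; I would argue the configuration can change only finitely many times, after which the multipliers are frozen and $F$ reduces to a fixed convex weighted-lasso objective on the surviving blocks. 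On this terminal phase the algorithm is ordinary cyclic coordinate descent on a convex function, whose objective values are non-increasing and convergent and whose accumulation points are coordinate-wise minimizers.

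Finally I would assemble the pieces: compactness of the sublevel sets confines the iterates to a compact set, so accumulation points exist; the eventual monotone descent forces the objective values to converge; and continuity together with uniqueness of the block updates, via Zangwill's theorem, identifies every accumulation point as a fixed point of the block-relaxation map, i.e.\ a point minimizing $F$ in each block and hence a stationary point of the (adaptive) truncating lasso problem. I expect the main obstacle to be making the ``finitely many configuration changes'' step fully rigorous---in particular ruling out cycling in which a lag is repeatedly truncated and un-truncated---and I would address it by quantifying a strict decrease in $F$ whenever the truncation pattern changes, so that the bounded-below, monotone behavior of the objective permits only finitely many such changes.
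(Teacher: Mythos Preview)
Your proposal is correct in spirit and actually goes further than the paper's own argument, but the route differs. The paper's proof is a two-line sketch: it observes that each block subproblem is a convex weighted-lasso problem, asserts that the objective is separable, and then invokes the block-coordinate-descent convergence results for nondifferentiable minimization of \citet{tseng2001nondiffBCD} (specifically, assumptions (A1), (B1)--(B3), (C1), together with Lemma~3.1 and Theorem~5.1 there). By contrast, you work within the \citet{DeLee:94}/Zangwill framework, establish coercivity and compactness of sublevel sets directly, and---crucially---confront head-on the dependence of $\Psi^t$ on $A^{(t-1)}$, arguing that the truncation configuration can change only finitely many times before the algorithm reduces to ordinary cyclic coordinate descent on a fixed convex problem. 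The paper's sketch does not engage with this dependency at all; its appeal to ``separability'' is arguably too quick, since the penalty weight on block $t$ is a function of the solution on block $t-1$. What you lose is brevity; what you gain is an honest treatment of the only genuinely non-standard feature of the problem. Your own caveat about ruling out truncation--untruncation cycling is well placed and would need to be discharged to make the argument complete, but the strict-decrease idea you sketch is the natural way to do it.
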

  \begin{proof}[(Sketch of the Proof)]
    Although the overall objective function is non-convex, each sub-problem is a weighted lasso problem and is therefore convex. On the other hand, the objective function in the (adaptive) truncating lasso problem is separable and it can be shown that assumptions (A1), (B1)--(B3) and (C1) in \citet{tseng2001nondiffBCD} are satisfied. The result follows from Lemma 3.1 and Theorem 5.1 in \citet{tseng2001nondiffBCD}.
  \end{proof}

  Both lasso as well as adaptive lasso problems include $d \times p$ covariates in each penalized regression problem. Therefore, using the \texttt{shooting} Algorithm of \citet{friedman2008rpg} (implemented in the R-package \texttt{glmnet}), estimation of the (adaptive) lasso problem requires $O(n \hat{d}^2 p^2)$ operations, where $\hat{d}$ is the estimate of the order of VAR from the truncating lasso penalty. On the other hand, partitioning over time points reduces the computational burden of each subproblem to $O(n p^2)$. From the general theory of Block-Relaxation algorithms \citep{DeLee:94}, it can be shown that Algorithm \ref{graphGrangerIterAlg} has at least a linear convergence rate. However, in our extensive simulation studies, the algorithm often converges in less than 10 iterations, and for large values of $T$, may require less time than lasso.

  \begin{figure}[t!]
    \centering
    {\includegraphics[scale=0.9, clip=TRUE, trim=2cm 17.7cm 1cm 1.3cm]{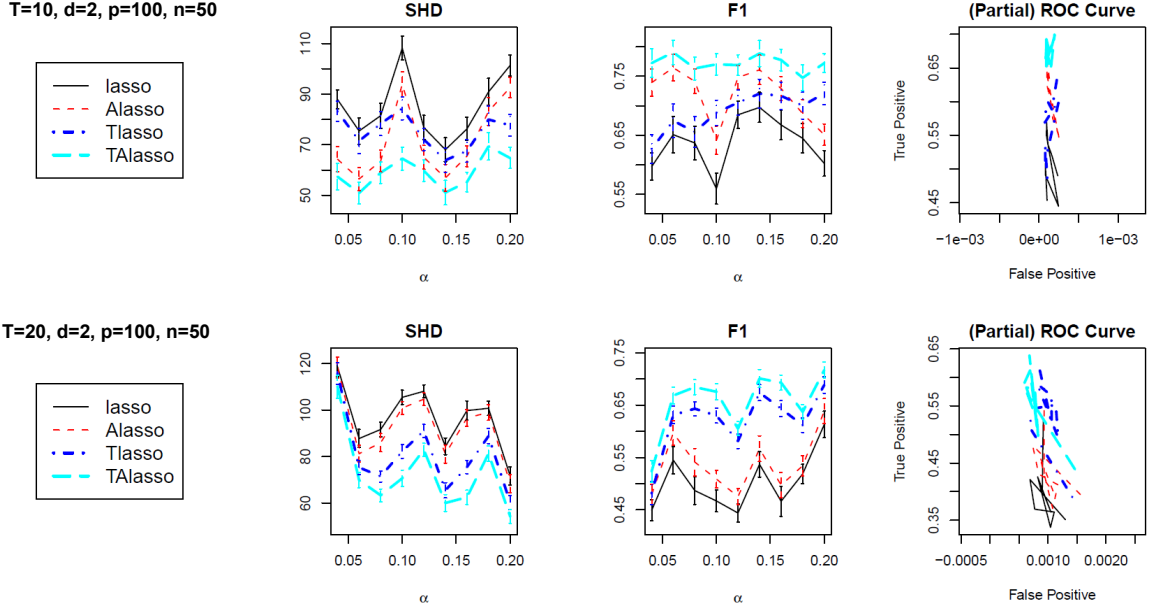}}
    \caption{\footnotesize Mean and standard deviation of performance criteria for \texttt{lasso}, \texttt{Alasso}, \texttt{Tlasso} and \texttt{TAlasso} in estimation of graphical Granger model, with $p=100$, $n=50$ and $d=2$. Top: $T=10$, Bottom: $T=20$.} \label{figsim}
  \end{figure}

\section{Results}\label{results}
\subsection{Simulation Studies}\label{sim}
  We evaluate the performance of the proposed truncating lasso penalty, as well as the lasso and adaptive lasso penalties, in reconstructing the Granger graphical models from time series observations. Several simulations, with different settings of parameters and different network structures are performed, and results of two simulations are presented here. In both simulations $p=100$, and $n=50$ independent and identically distributed (i.i.d.) observations are generated according to a VAR model with order $d=2$, and a Gaussian noise with standard error of $\sigma=0.2$ is added to the observations, i.e. $X^t = \sum_{k=1}^{d}{A^k X^{t-k}}, t=1, \ldots, T$.

  To facilitate the comparison, we control the strength of association among connected nodes (i.e. the non-zero elements of the adjacency matrix) via a single parameter $\rho = 0.7$. In these simulations, the value of the tuning parameter for the penalty coefficient, $\alpha$, is varied from $0.01$ to $0.2$, while the value of the second tuning parameter for the truncating lasso penalty, $\beta$ is fixed at $0.1$. In the first simulation, $T = 10$, while the second simulation includes $T = 20$ time points. Finally, in all simulations (including those not shown), the sparsity level in the network is controlled by setting the total number of edges equal to the sample size $n$.

  To measure the performance of the estimators, we consider three different performance criteria: (1) The Structural Hamming Distance (SHD), (2) the $\text{F}_1$ measure, and (3) The partial ROC plot.

  SHD measures the total number of differences in edges between the estimated and true graphs, with lower values corresponding to better estimates. In other words, $\text{SHD} = \card{(\hat{E} \backslash E)} + \card{(E \backslash \hat{E})}$, , where $\hat{E}$ and $E$ denote the estimated and true edge sets. The $\text{F}_1$ measure is the harmonic mean of \emph{precision} ($P$) and \emph{recall} ($R$) (i.e. $\text{F}_1 = 2PR/(P + R)$) for the estimated graphs, and can be used to compare the performance of estimators in networks with different structures. The value of this summary measure ranges between 0 and 1, with higher values corresponding to better estimates. Finally, the (partial) ROC plot is commonly used to evaluate the performance of classification methods, and in our context illustrates the changes in the true positive rate in comparison to the false positive rate, as the tuning parameter changes.

  The mean and standard deviations of the above criteria, over 50 simulations, for \texttt{lasso}, adaptive lasso (\texttt{Alasso}), truncating lasso (\texttt{Tlasso}) and truncating adaptive lasso (\texttt{TAlasso}) are given in Figure \ref{figsim}. It can be seen that in both cases, the \texttt{TAlasso} provides the best estimate. In addition, as the length of the time series increases, the advantages of the truncating penalty become more pronounced. This improvement is particularly significant in case of small sample sizes, but diminishes in simulations with large $n$, as lasso and adaptive lasso estimates can overcome the curse of dimensionality (data not shown). The above simulation studies provide additional evidence in favor of the adaptive lasso procedure, and indicate that the proposed truncation mechanism offers additional improvement for estimation of Granger causality over the regular version of the lasso penalty. Additional simulations (not shown) with other values of $\rho$ and $\sigma$ indicate that although changes in $\sigma$ do not significantly affect the results, the performance of all methods diminish as $\rho$ decreases. However, the qualitative results presented here are true for other values of $\rho$ and $\sigma$.

  To further investigate the effect of the truncating lasso penalty, it is helpful to examine the adjacency matrix of the estimated graphs. Figure \ref{figsim1_image} provides this information for a small network of size $p=20$. As it can be seen, both lasso and adaptive lasso estimates include additional edges beyond the true order of the VAR model (indicated by small rectangles), while failing to uncover some of the true edges (indicated by small ovals). This is mainly due to the fact that the number of covariates ($d \times p$) is much larger than the sample size $n$. However, by reducing the number of covariates through truncation, the truncating lasso penalty overcomes this shortcoming, and offers improvements in terms of both false positive and false negative rates. 
  
  \begin{figure}[t!]
    \centering
    \scalebox{0.5}
    {\includegraphics[clip=TURE, trim=2cm 9cm 0cm 3cm]{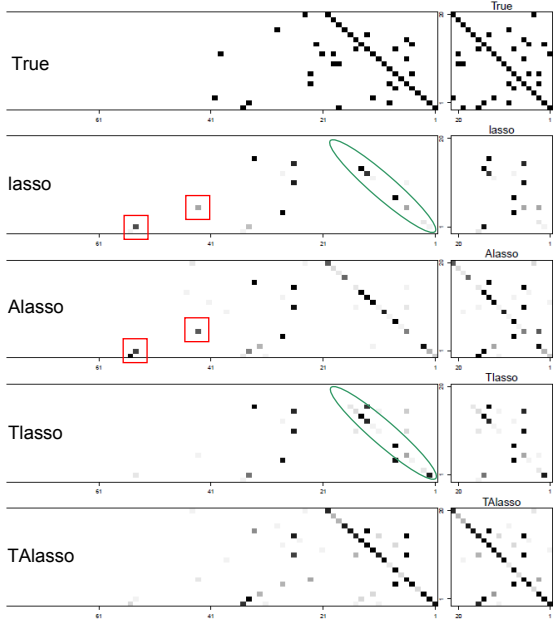}} 
    \caption{\footnotesize Images of the adjacency matrix of the true graph, and estimates from \texttt{lasso}, \texttt{Alasso}, \texttt{Tlasso} and \texttt{TAlasso}. Images on the left correspond to the adjacency matrices of graphical Granger  models (true and estimates) over time, while images on the right represent the cumulative graphical model (the network structure). In the left panel of the true adjacency matrix, a dark pixel in the $(i,j)$th entry at time $t$ represents an edge from $X^{T-t}_j$ to $X^{T}_i$. The gray-scale images for the estimates represent percentage of times where an edge is present in 50 simulations. Significant false positives and negatives are marked with rectangles and ovals, respectively.} \label{figsim1_image}
  \end{figure}

\subsection{Analysis of the Regulatory Network of E-coli}\label{ecoli}
  \citet{kao2004tbd} proposed to use Network Component Analysis to infer the transcriptional regulatory network of Escherichia coli (E-coli). They also provided whole genome expression data over 8 time points with different sample sizes, as well as information about the known regulatory network of E-coli. Figure \ref{figEcoli} represents true and estimated regulatory networks along with performance measures of both \texttt{Alasso}, as well as \texttt{TAlasso} penalties. It can be seen that the rate of recall is improved in the \texttt{TAlasso} estimate, resulting in a higher $\text{F}_1$ measure. The improved performance of the \texttt{TAlasso} penalty in comparison to the \texttt{Alasso} penalty, as well as the overall performance of this estimator, further validate our numerical analysis.

  For comparison, we also provide the estimated regulatory network using our implementation of the group lasso penalty of \citet{lozano2009ggg} (\texttt{grpLasso}). It can be seen that in comparison to \texttt{TAlasso}, \texttt{grpLasso} performs poorly in this example.

  \begin{figure}[t]
    \centering
    \scalebox{0.52}
    {\includegraphics[clip=TURE, trim=1cm 0cm 0cm 0cm]{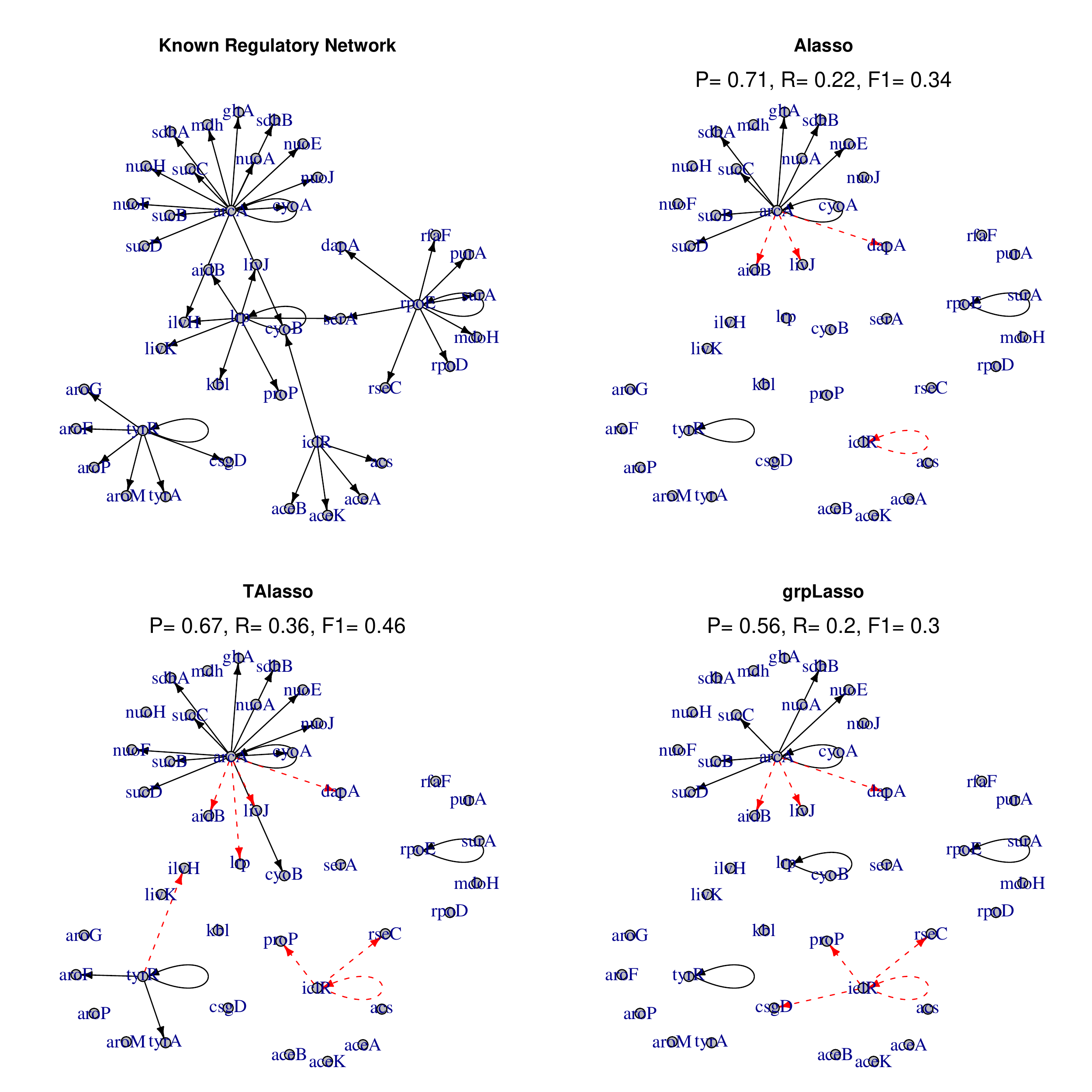}} 
    \caption{\footnotesize Known transcription regulatory network of E-coli along with estimates based on \texttt{Alasso}, \texttt{TAlasso} and \texttt{grpLasso}. True edges (True Positives in estimated networks) are marked with solid black arrows, while False Positives are indicated by dashed red arrows.} \label{figEcoli}
  \end{figure}

\subsection{Analysis of BioGRID Network in HeLa Cells}\label{hela}
  The genome-wide expression of cell cycle genes in human cancer cell lines (HeLa) were analyzed by \citet{whitfield2002hela}. The authors performed different experiments resulting in multiple mRNA time-course samples. \citet{sambo2008cnet} extracted a subset of 9 genes from the human cell cycle genes for which the regulatory network is already determined in the BioGRID database (\href{www.thebiogrid.org}{www.thebiogrid.org}). The authors developed an algorithm for reverse engineering causal gene networks, called \texttt{CNET}, and applied it to this data set. \texttt{CNET} is a search-based algorithm, which searches over the space of possible graphs, in order to find the candidate graph with the highest score.

  This set of 9 genes was also analyzed by \citet{lozano2009ggg}. Figure \ref{figHela} represents the true regulatory network along with estimated networks using our proposed \texttt{TAlasso} estimate, as well as the estimates based on the group lasso and \texttt{CNET} methods. As with the other two groups, we used the third experiment of \citet{whitfield2002hela}, consisting of 47 time points and we considered a maximum time lag of $d=3$. The estimates for group lasso and \texttt{CNET} were reconstructed based on the plots presented by authors, ignoring autoregulatory interactions in the group lasso estimate\footnote{There appears to be a typo in results of \citet{lozano2009ggg}: The BioGRID network should be referred to as the network in Figure 5b (instead of 5a in the paper). Also, the precision, recall and $\text{F}_1$ measures based on the network in Figure 5 are different from the values reported in the paper.}. The best performance is achieved by the \texttt{CNET} algorithm and the authors point out that this result is in line with the best performance obtained in simulated data sets. The performance of the \texttt{TAlasso} method is slightly better than the group lasso estimate. It is important to note that although penalization methods (group lasso and truncating lasso) fail to perform as well as search-based algorithms like the \texttt{CNET} algorithm, they are computationally more efficient and can be used to analyzed large networks, whereas search-based algorithm become intractable for analysis of real-world biological networks.
  \begin{figure}[t]
    \centering
    \scalebox{0.3}
    {\includegraphics{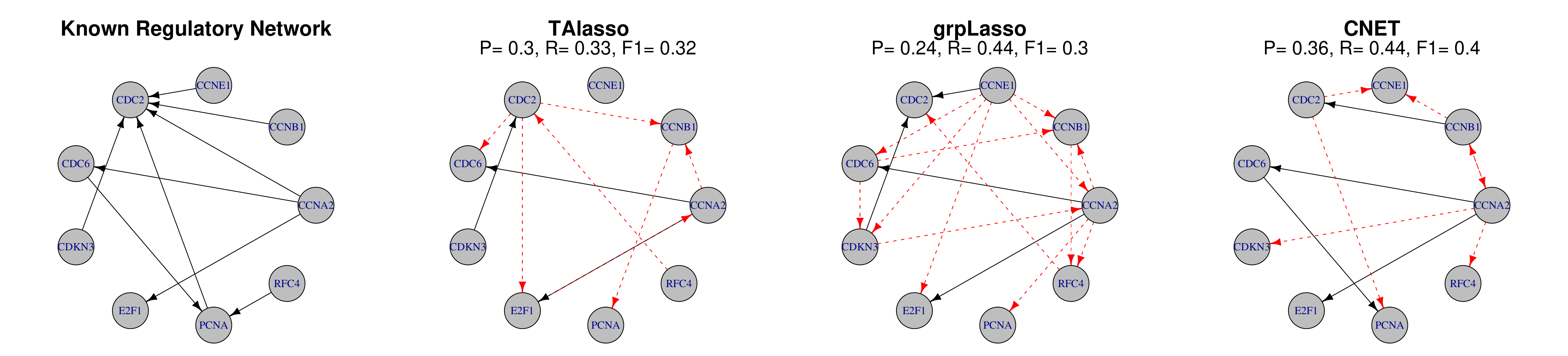}} 
        \caption{\footnotesize Known BioGRID network of human Hela Cell genes along with the estimates based on \texttt{TAlasso}, \texttt{grpLasso} and \texttt{CNET}. True edges (True Positives in estimated networks) are marked with solid black arrows, while False Positives are indicated by dashed red arrows.} \label{figHela}
  \end{figure}

  It can be seen from Figure \ref{figHela} that two of the correctly estimated edges, from CCNA2 to CDC6 and E2F1, are shared in all three estimates and that all true positives of \texttt{TAlasso} are also found by \texttt{grpLasso}. On the other hand, a number of estimated edges not present in the BioGRID network are found in two or more estimates. This may suggest that some of the estimated edges (e.g. the edge from CCNA2 to CCNB1) may represent valid regulatory links that are not included in the BioGRID data set. Validation of such hypotheses requires further investigations.

  A main advantage of the truncating lasso estimate is that it also provides information on the time lag of regulatory effects of transcription factors on other genes. Table \ref{tblHela} provides details of information on effective time lags of effects of genes in the network. Such information provides valuable clues to the underlying regulatory mechanism but is overlooked in the other two methods.

\section{Discussion}\label{conc}
  Estimation of gene regulatory networks is a crucial problem in computational biology. Information conveyed from these networks can be exploited to improve estimation and inference procedures, in particular to determine which pathways are involved in the cell's response to environmental factors or in disease progression \citep[see e.g.][]{shojaie2009NetBasedGSA, shojaie2009NetEnrich}. Such information is also critical in drug development and medicine.

  In this paper, we proposed a novel penalization method, called truncating lasso, for estimation of gene regulatory networks based on the concept of Granger causality. The proposed method can correctly determine the order of the underlying time series, and uses that information to reduce the number of covariates. Such reduction, in turn results in better false positive and false negative rates. Moreover, the proposed method provides information on the time lags of regulatory effects of genes on each other.

  Granger causality is an intuitive concept and its underlying assumption (that expressions of genes at each time point are only affected by expression levels at previous times) can be justified in the study of biological systems. However, from a technical point of view, it may be possible to reformulate the resulting autoregressive model using different causal relationships. A more practical issue concerns the time lags between observations: When observations are observed on coarse time intervals, some of the underlying causal effects may not be distinguishable. The success of reverse engineering algorithms, in particular penalization methods, requires repeated time series observations over fine time grids.

    \begin{table}[h!]
      \caption[HeLa]
      {\footnotesize Time lag of regulatory effects of genes in the estimate of BioGRID network based on the \texttt{TAlasso} algorithm.}\label{tblHela}
      \centering
      {\scriptsize
          \begin{tabular}{|l c|l c|}
             \hline Interaction & Time lag & Interaction & Time lag \\ \hline
             CCNA2 $\rightarrow$ CCNB1 & 1 & CDC2  $\rightarrow$ CDC6  & 1 \\
             CDNK3 $\rightarrow$ CDC2  & 1 & CDC2  $\rightarrow$ E2F1  & 2 \\
             CCNA2 $\rightarrow$ E2F1  & 1 & CCNA2 $\rightarrow$ CDC6  & 2 \\
             CCNB1 $\rightarrow$ PCNA  & 1 & E2F1  $\rightarrow$ CCNA1 & 2 \\
             CDC2  $\rightarrow$ CCNB1 & 1 & RFC4  $\rightarrow$ CDC2  & 2 \\ \hline
          \end{tabular}
    }
  \end{table}

  The method proposed in this paper offers significant improvements over both lasso and adaptive lasso estimates, especially for small to moderate sample sizes. This is achieve by excluding unnecessary covariates from the regression problem. Further improvements may be possible by exploiting the stationarity of the stochastic process in order to take advantage of full information provided in the time series, and should be considered in the future.

\section*{Acknowledgement}
This article will be presented at ECCB10 \href{http://www.eccb10.org/}{http://www.eccb10.org/} and published in Bioinformatics. The authors would like to thank three anonymous referees for constructive comments. The work of George Michailidis was partially supported by NIH grant 1RC1CA145444-0110.

\section*{Appendix}\label{appndx}
  \begin{thm}[Consistency of Truncating Adaptive Lasso]
    Let $s$ be the total number of true edges in the graphical Granger model and suppose that for some $a > 0$, $p = p(n) = O(n^a)$ and $|\rm{pa}_i| = O(n^b)$, where $s n^{2b-1} \log{n}=o(1)$ as $n \rightarrow \infty$. Moreover, suppose that there exists $\nu > 0$ such that for all $n \in \mathbb{N}$ and all $i \in V$, $\Var{\left( X_i^T | X^{T-d:T-1}_{1:p} \right)} \ge \nu$ and there exists $\delta > 0$ and some $\xi > b$ such that for every $i \in V$ and for every $j \in \rm{pa}_i$, $| \pi_{ij} | \ge \delta n^{-(1-\xi)/2}$, where $\pi_{ij}$ is the partial correlation between $X_i$ and $X_j$ after removing the effect of the remaining variables.

    Assume that $\lambda \asymp d n^{-(1-\zeta)/2}$ for some $b < \zeta < \xi$ and $d > 0$, and the initial weights are found using lasso estimates with a penalty parameter $\lambda^0$ that satisfies $\lambda^0 = O(\sqrt{\log p /n})$. Also, for some large positive number $g$, let $\Psi^t = g \exp{(n I\{\|A^{(t-1)}\|_0 < p^2 \beta / (T-t) \} )}$ (i.e. $M=g e^n$). Then if true causal effects diminish over time,
        \begin{enumerate}
        \item[(i)]
            With probability converging to 1, no additional Granger-causal effects are included in the model and the signs of such effects are correctly estimated.
        \item[(ii)]
            With probability asymptotically larger than $1-\beta$, true Granger-causal effects and the order of the VAR model are correctly determined.
    \end{enumerate}
  \end{thm}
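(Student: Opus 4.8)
The plan is to exploit the observation, stressed in Section \ref{model_2}, that the graphical Granger model is a DAG on $Tp$ nodes whose ordering is fixed by the temporal index. Dropping the truncation factors, problem \eqref{problem_NstdLasso} is exactly the adaptive-lasso estimate of this DAG over $dp$ covariates, so the backbone of the argument is to invoke the high-dimensional selection and sign consistency results for penalized DAG estimation of \citet{shojaie2009plDAG}. First I would verify that the stated hypotheses translate into the conditions required there: the growth rates $p=O(n^a)$ and $|\mathrm{pa}_i|=O(n^b)$ with $s n^{2b-1}\log n = o(1)$ supply the sparsity/dimension control; the lower bound $\Var(X_i^T \mid X^{T-d:T-1}_{1:p}) \ge \nu$ guarantees well-conditioned conditional regressions; the signal condition $|\pi_{ij}| \ge \delta n^{-(1-\xi)/2}$ on partial correlations is the beta-min condition; and the initial-weight rate $\lambda^0 = O(\sqrt{\log p/n})$ ensures the adaptive weights separate the true parents from the null set. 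The extra factor $d$ in $\lambda \asymp d n^{-(1-\zeta)/2}$ simply absorbs the $d$ temporal blocks in a union bound over the per-lag subproblems.

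Granting this reduction, Part (i) follows the usual adaptive-lasso route. With probability tending to $1$ the estimate is supported on the true parent set with no false inclusions, because the adaptive weights inflate the penalty on null coefficients while the choice of $\lambda$ keeps the estimation error of order $O(d n^{-(1-\zeta)/2})$. Sign consistency is then immediate from $\zeta < \xi$: the bias and estimation error are of strictly smaller order than the minimum signal $\delta n^{-(1-\xi)/2}$, so every nonzero coordinate is recovered with its correct sign. These statements are made uniform over the $i=1,\dots,p$ regressions by a union bound, using $p=O(n^a)$ together with the tail bounds for the Gaussian errors.

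The genuinely new ingredient is the truncation factor $\Psi^t = g\exp(n\,I\{\|A^{(t-1)}\|_0 < p^2\beta/(T-t)\})$. I would analyse it through the two regimes of the indicator. When the indicator is $0$ one has $\Psi^t = g$, a constant, so the penalty is merely a rescaling of the adaptive-lasso penalty and the consistency of the preceding paragraph applies verbatim. When the indicator is $1$ the factor $\Psi^t = g e^n$ diverges, and since the loss reduction from any finite coefficient is $O(1)$ while the penalty it incurs is of order $e^n$, every coefficient at lag $t$ (and, by induction, at all later lags) is forced to exactly zero. It remains to show the indicator fires at precisely the true order. Because true effects diminish over time, for $t \le d$ the true lagged matrices are sufficiently dense that, on the event from Part (i), the recovered support exceeds the threshold $p^2\beta/(T-t)$ and no truncation occurs; while at the first spurious lag $t=d+1$ the true matrix vanishes, so the support consists only of false positives and, with probability at least $1-\beta$, its size falls below the threshold and the truncation is triggered. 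This coupling of the threshold $p^2\beta/(T-t)$ to the false-negative budget $\beta$ is exactly what yields the $1-\beta$ guarantee in Part (ii), rather than probability tending to $1$.

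The main obstacle I anticipate is the recursive dependence introduced by the block-relaxation algorithm: the truncation at lag $t$ is driven by the estimated support at lag $t-1$, which is itself random, and the non-convex objective only guarantees convergence to a stationary point (cf. the convergence lemma above). Controlling the estimated edge counts $\|A^{(t-1)}\|_0$ simultaneously across all lags, so that they stay above threshold for $t\le d$ and drop below it for $t>d$ with the claimed probabilities, requires a careful inductive argument with union bounds over the $d$ lags and $p$ nodes, together with an argument that the stationary point located by the algorithm inherits the correct support from the idealized adaptive-lasso solution. Establishing that the rare event of premature truncation is bounded by $\beta$, uniformly across the recursion, is where the bulk of the technical work will lie.
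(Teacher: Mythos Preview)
Your plan is essentially the paper's own: reduce to the adaptive-lasso DAG result of \citet{shojaie2009plDAG} for Part~(i), then handle the truncation factor by splitting on the indicator and using a penalty-dominates-loss argument (the paper phrases this via the KKT condition and the polynomial-versus-exponential comparison, which is equivalent to your ``loss reduction $O(1)$ versus penalty $e^n$'' argument).

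Two small corrections. First, you attach the $1-\beta$ to the wrong direction mid-argument: the event ``truncation fires at the first spurious lag $d{+}1$'' is governed by the \emph{false-positive} rate and hence holds with probability tending to $1$ by Part~(i); the $1-\beta$ budget is consumed entirely by the \emph{premature-truncation} event you correctly flag in your final paragraph (false negatives at some lag $t\le d$ pushing $\|\hat A^{t-1}\|_0$ below threshold). The paper's accounting is exactly this: the threshold $p^2\beta/(T-t)$ is calibrated so that, under the ``effects diminish over time'' hypothesis, the total number of true edges at lags $\ge t_0$ is at most a $\beta$-fraction, which is what yields the $1-\beta$ statement. Second, the recursive/algorithmic obstacle you anticipate is not addressed in the paper's proof at all: the argument there treats the per-lag estimates as the weighted-lasso solutions directly, without confronting the block-relaxation stationary-point issue. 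So the ``bulk of the technical work'' you expect is in fact simply elided; if you want a proof at the level of rigor you describe, you will be supplying more than the paper does.
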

  \begin{proof}
    If $\beta=0$, inclusion of the true causal effect, exclusion of incorrect effects and consistency of signs of effects follow from Theorem 3 of \citet{shojaie2009plDAG}. Since $\beta$ has no effect on the probability of false positive, this proves (i).

    For any given $\beta>0$, suppose $t_0$ is the smallest $t$ for which $\|A^{(t-1)}\|_0 < p^2 \beta / (T-t)$. Then for $t < t_0$ $\Psi^t = 1$ and has no effect on the estimate. Let $t \ge t_0$. Then using the KKT conditions, a coefficient is included in the weighted lasso estimate only if $|2n^{-1} \T{(\mathcal{X}^t_j)} (\mathcal{X}^T_i - \mathcal{X}^t \theta^t)| > \Psi^t \lambda w^t_j$. However, $\T{(\mathcal{X}^t_j)} (\mathcal{X}^T_i - \mathcal{X}^t \theta^t)$ is stochastically smaller than
    $\T{(\mathcal{X}^t_j)} \mathcal{X}^T_i$, which is in turn a polynomial function of $n$. On the other hand, $\lambda$ and $w^t_j$ are also polynomial functions of $n$, whereas $\Psi^t$ increases exponentially as $n \rightarrow \infty$. Hence, for all $j=1, \ldots, p$ and $t \ge t_0$, there exists an $n$ such that $|2n^{-1} \T{(\mathcal{X}^t_j)} (\mathcal{X}^T_i - \mathcal{X}^t \theta^t)| < \Psi^t \lambda w^t_j$ and therefore, $A^t = 0, t \ge t_0$. However, since the number of true causal effects diminish over time, the total number of true edges in time lags $t \ge t_0$ is less than $\beta$. This proves the first part of (ii).

    Finally, to prove that the order of VAR is correctly estimated, i.e. $d = t_0-1$, we consider two complementary events: $d < t_0-1$ and $d > t_0-1$. Prior to $t_0$, false positives occur with exponentially small probability, hence, the probability that $d < t_0-1$, is negligible. On the other hand, $d > t_0-1$ only if true edges are not included in $\hat{A}^t_0$ and as a result $\|\hat{A}^{(t_0-1)}\|_0 < p^2 \beta / (T-t_0)$. But false negatives occur if true edges vanish in the adaptive lasso estimate. However, adaptive lasso finds the true edges with exponentially large probability, hence, $\mathbb{P}(d < t_0-1) \ge 1 - \beta - O(\exp(-c n^d))$ for constants $c \text{ and } d$. This completes the proof.
  \end{proof}

\bibliographystyle{natbib}
\bibliography{ShojaieBib}
\end{document}